\documentclass[a4paper,UKenglish,cleveref, autoref, thm-restate]{lipics-v2021}
%This is a template for producing LIPIcs articles. 
%See  for further information.
%for A4 paper format use option "a4paper", for US-letter use option "letterpaper"
%for british hyphenation rules use option "UKenglish", for american hyphenation rules use option "USenglish"
%for section-numbered lemmas etc., use "numberwithinsect"
%for enabling cleveref support, use "cleveref"
%for enabling autoref support, use "autoref"
%for anonymousing the authors (e.g. for double-blind review), add "anonymous"
%for enabling thm-restate support, use "thm-restate"
%for enabling a two-column layout for the author/affilation part (only applicable for > 6 authors), use "authorcolumns"
%for producing a PDF according the PDF/A standard, add "pdfa"

%\pdfoutput=1 %uncomment to ensure pdflatex processing (mandatatory e.g. to submit to arXiv)
%\hideLIPIcs  %uncomment to remove references to LIPIcs series (logo, DOI, ...), e.g. when preparing a pre-final version to be uploaded to arXiv or another public repository

%\graphicspath{{./graphics/}}%helpful if your graphic files are in another directory

\bibliographystyle{plainurl}% the mandatory bibstyle

\title{Dynamic Blocked Clause Elimination for Projected Model Counting} %TODO Please add

\author{Jean-Marie Lagniez}{Univ. Artois, CNRS, CRIL, France \and \url{https://www.cril.univ-artois.fr/~lagniez/}}{lagniez@cril.fr}{https://orcid.org/0000-0002-6557-4115}{}

\author{Pierre Marquis}{Univ. Artois, CNRS, CRIL, IUF, France \and \url{https://www.cril.univ-artois.fr/~marquis/}}{marquis@cril.fr}{https://orcid.org/0000-0002-7979-6608}{}

\author{Armin {Biere}}{University Freiburg, Germany \and \url{https://cca.informatik.uni-freiburg.de/biere/}}{biere@cs.uni-freiburg.de}{https://orcid.org/0000-0001-7170-9242}{}

\authorrunning{J.-M. Lagniez, P. Marquis, A. Biere}

\Copyright{Jean-Marie Lagniez, Pierre Marquis, Armin Biere}

\ccsdesc[100]{Theory of computation~Automated reasoning}

\keywords{Projected model counting, blocked clause elimination, propositional logic} 

\category{} %optional, e.g. invited paper

\relatedversion{} %optional, e.g. full version hosted on arXiv, HAL, or other respository/website
%\relatedversiondetails[linktext={opt. text shown instead of the URL}, cite=DBLP:books/mk/GrayR93]{Classification (e.g. Full Version, Extended Version, Previous Version}{URL to related version} %linktext and cite are optional

%\supplement{}%optional, e.g. related research data, source code, ... hosted on a repository like zenodo, figshare, GitHub, ...
%\supplementdetails[linktext={opt. text shown instead of the URL}, cite=DBLP:books/mk/GrayR93, subcategory={Description, Subcategory}, swhid={Software Heritage Identifier}]{General Classification (e.g. Software, Dataset, Model, ...)}{URL to related version} %linktext, cite, and subcategory are optional

%\funding{(Optional) general funding statement \dots}%optional, to capture a funding statement, which applies to all authors. Please enter author specific funding statements as fifth argument of the \author macro.

\acknowledgements{This work has benefited from the support of the AI Chair EXPEKCTATION (ANR-19-CHIA-0005-01) of the French National Research Agency.}

\supplement{}
\supplementdetails[subcategory={}, cite={}, swhid={}]{Software}{https://zenodo.org/records/11854002}

\nolinenumbers %uncomment to disable line numbering

%Editor-only macros:: begin (do not touch as author)%%%%%%%%%%%%%%%%%%%%%%%%%%%%%%%%%%
\EventEditors{John Q. Open and Joan R. Access}
\EventNoEds{2}
\EventLongTitle{42nd Conference on Very Important Topics (CVIT 2016)}
\EventShortTitle{CVIT 2016}
\EventAcronym{CVIT}
\EventYear{2016}
\EventDate{December 24--27, 2016}
\EventLocation{Little Whinging, United Kingdom}
\EventLogo{}
\SeriesVolume{42}
\ArticleNo{23}
%%%%%%%%%%%%%%%%%%%%%%%%%%%%%%%%%%%%%%%%%%%%%%%%%%%%%%

\def\cnf{{\tt CNF}}
\def\bcp{{\tt BCP}}
\def\bce{{\tt BCE}}

\newcommand{\norm}[1]{\|#1\|}

\usepackage[vlined]{algorithm2e}
\usepackage{amssymb}

\usepackage{cancel}

\newcommand\nmodels{\mathbin{\cancel{\models}}}
\newcommand\nequiv{\mathbin{\cancel{\equiv}}}

\begin{document}

\maketitle

%TODO mandatory: add short abstract of the document
\begin{abstract}
In this paper, we explore the application of blocked clause elimination for projected model counting.  This is the problem 
of determining the number of models $\norm{\exists X . \Sigma}$ of a propositional formula $\Sigma$ after eliminating a given
set $X$ of variables existentially. 
Although blocked clause elimination is a well-known technique for SAT solving, its direct application to model counting is 
challenging as in general it changes the number of models.
However, we demonstrate, by focusing on projected variables during the blocked clause search, that blocked 
clause elimination can be leveraged while preserving the correct model count. 
To take advantage of blocked clause elimination in an efficient way during model counting, a novel data structure and associated algorithms are introduced. 
Our proposed approach is implemented in the model counter {\tt d4}. Our experiments demonstrate the computational 
benefits of our new method of blocked clause elimination for projected model counting.

\end{abstract}

\section{Introduction}
\label{sec:introduction}
Propositional model counting consists determines the number of models of a propositional formula $\Sigma$, 
typically represented in conjunctive normal form (\cnf). 
Many applications however require  a projected variant focusing on a specific set $X$ of variables of interest:
given a propositional formula $\Sigma$ and a set $X$ of propositional variables to be forgotten, the projected model 
counting problem consists in computing the number of interpretations over the variables occurring in $\Sigma$ but not in $X$, which 
coincide on $X$ with a model of $\Sigma$. 
In other words, the goal is to count the number of models of the quantified Boolean formula $\exists X. \Sigma$ over its variables 
(i.e., those present in $\Sigma$ but absent in $X$).

The projected model counting problem is significant for various application domains of artificial intelligence (AI).
For instance, in planning scenarios, it helps to evaluate the robustness of a plan by determining the number of initial states from 
which the plan execution leads to a goal state \cite{AzizCMS15}. 
Additionally, its applicability extends beyond AI to formal verification problems \cite{KlebanovMM13} and database operations \cite{AbiteboulHV95}.
As a generalization of the standard model counting problem {\tt \#SAT}, for the special case $X = \emptyset$, the projected 
model counting problem is at least as complex as {\tt \#SAT} ({\tt \#P-hard}). 
However, the possibility to eliminate some variables actually also might simplify the problem, such as when all variables in 
$\Sigma$ belong to $X$, reducing the problem to simply determining the satisfiability of $\Sigma$. 
However, in practice projected model counting often turns out to be more challenging than the standard model 
counting problem. 
This can be explained by the additional constraints imposed on the branching heuristic, i.e., in which order variables can be used as decisions, making the problem inherently harder. 
This is also reflected for instance in upper known bounds in the literature~\cite{FichteHMTW23}
on the time complexity of model counting even formulas with fixed treewidth $k$, i.e., $\mathcal{O}(2^k)$ for standard
and $\mathcal{O}(2^{2^k})$ projected model counting.

One way to speed up model counting is to employ preprocessing which simplifies the formula before tackling the 
model counting task. 
Preprocessing methods have shown to be effective across various automated reasoning tasks,
notably in {\tt SAT} solving and {\tt QBF} solving \cite{BiereJK21}. 
Among these preprocessing techniques, blocked-clause elimination (\bce) \cite{JarvisaloBH10} significantly improves solver 
performance by emulating several other, more complex preprocessing techniques \cite{JarvisaloBH12}.
Blocked clauses, initially introduced by Kullmann \cite{Kullmann99} as a generalization of extended resolution clauses, 
are pivotal in propositional preprocessing techniques. 
In essence, a clause $\alpha$ is deemed blocked within a \cnf\ formula $\Sigma$ if it includes a literal $\ell$ for which all 
conceivable resolvents of $\alpha$ over $\ell$ yield tautologies.
Removal of blocked clauses can significantly enhance the performance of {\tt SAT} solvers \cite{JarvisaloBH10}.
Furthermore, generalized forms of \bce\ have demonstrated remarkable performance improvements in solving problems beyond {\tt NP}, 
such as {\tt QBF} \cite{HeuleJLSB15}, {\tt DQBF} \cite{WimmerGNSB15} and even first-order theorem proving~\cite{DBLP:conf/lpar/Kiesl0STB17}.

However, while several preprocessing techniques used for {\tt SAT} solving can be adapted to improve
model  counting \cite{LagniezM17,LagniezLM20}, others, such as the blocked clause elimination technique, are 
unsuitable due to their inability to preserve the number of models. 
In this paper, we address this challenge by delineating conditions under which the use of \bce\ is correct in projected model 
counting. 
Specifically, we demonstrate that focusing on projected variables during the blocked clause search is correct,
i.e., gives the same projected count. 
The rationale behind this lies in the fact that when concentrating on sub-formulas containing only projected variables, the 
requirement boils down to ensure satisfiability. 
Consequently, clauses blocked on projected variables can safely be removed.

When used for model counting, simplification techniques are typically applied up-front during preprocessing
and even though modern {\tt SAT} solvers make heavily use of
interleaving formula simplification with {\tt CDCL} search, also called \emph{inprocesssing}~\cite{DBLP:conf/cade/JarvisaloHB12},
this form of simplification is currently performed only at the root-level (decision level zero).
In this paper we go beyond root-level simplification and propose to
dynamically apply the blocked clause elimination technique dynamically during search
at every decision level in the form of \emph{dynamic blocked clause elimination}.
In this sense our approach is similar to look-ahead solving~\cite{DBLP:series/faia/HeuleM21}, which use simplification techniques during search, i.e., probing techniques, at every decision level.

To accomplish this, we introduce novel data structures and associated algorithms tailored for dynamic inprocessing. 
Our method efficiently identifies clauses eligible for elimination by employing a mechanism akin to watched literals. 
Importantly, this methodology is not tied to a specific model counter; it seamlessly integrates into any state-of-the-art 
model counter.

To assess the efficiency of our approach, we conducted experiments using the model counter \texttt{d4} \cite{LagniezM17a},
modified to integrate our newly developed data structures and algorithms for projected model counting. 
We evaluated the performance of this new version of {\tt d4} across various benchmarks from recent model counting 
competitions (available at \url{https://mccompetition.org/}).
Our experimental results underscore the computational advantages of employing blocked clause elimination for 
projected model counting. 
For certain benchmarks, the adoption of \bce\ dynamic inprocessing led to a substantial reduction in computation time, 
with time savings of up to one order of magnitude compared to the baseline version of \texttt{d4}.
To ensure that the improvements are indeed attributable to the use of \bce\ inprocessing, we also examined a version of 
\texttt{d4} that implements \bce\ dynamically during preprocessing only. 

Interestingly, our findings indicate that employing \bce\ in preprocessing had no discernible impact on the effectiveness of the 
model counter {\tt d4}, underscoring the possibility to take advantage of \bce\ eagerly during the model counting process.

The remainder of the paper is structured as follows. 
The next section provides formal preliminaries. 
Following this, we delve into theoretical insights and give implementation details on
how to perform
\bce~dynamically during search.
Then, we outline the experimental protocol adopted for our empirical evaluations, along with the corresponding results.
Finally, we conclude the paper, offering insights into potential avenues for future research. 
The source code and benchmarks utilized in our experiments are provided as supplementary materials.

\section{Preliminaries}
\label{sec:preliminaries}

Let $\mathcal{L}$ be a propositional language built up from a finite set of propositional variables $\mathcal{P}$ and the 
standard logical connectives. 
The symbols $\bot$ and $\top$ represent the Boolean constants for falsehood and truth, respectively. 
A {\em literal} $\ell$ is either a propositional variable (e.g., $x$) or its negation ($\neg x$). 
For a literal $\ell$ defined over variable $x$, its {\em complementary literal} $\overline{\ell}$ is defined as 
$\overline{\ell} = \neg x$ if $\ell = x$, and $\overline{\ell} = x$ if $\ell = \neg x$, with $\mathit{Var}(\ell) = x$ denoting 
the variable of $\ell$. 
A {\em term} is a conjunction of literals. A {\em clause} is a disjunction of literals. Terms and clauses
are also interpreted as their sets of literals whenever convenient. 

A clause is a {\em tautology} if it contains $\top$, or both $x$ and $\neg x$ for some variable $x$.
A \cnf\ formula $\Sigma$ is a conjunction of clauses, also viewed as set of clauses when needed. 
The set of propositional variables occurring in $\Sigma$
is denoted
$\textit{Var}(\Sigma)$.
If a variable $x \in X$ does not belong to $\textit{Var}(\Sigma)$, then $x$ is said to be \emph{free} in $\Sigma$.
Each clause is associated with a unique identifier represented as an integer. 
A clause $\alpha_i$ of a \cnf\ formula $\Sigma$ can be accessed using its identifier through square bracket notation, 
denoted as $\Sigma[i]$. 
Thus $\alpha_i$ is also noted $\Sigma[i]$.
We denote by $S_{\ell}(\Sigma)$ the set of clauses of $\Sigma$ that contain literal $\ell$. 
When no ambiguity about $\Sigma$ is possible, we simply use the shorthand notation $S_{\ell}$ instead of explicitly 
writing $S_{\ell}(\Sigma)$.

\begin{example}
\label{ex:running}
Consider the \cnf{} formula $\Sigma = \{\alpha_1, \alpha_2, \ldots, \alpha_{11}\}$ with
\begin{center}
\begin{tabular}{llll}
$1: x_1 \vee x_2$ &  
$2: \neg x_2 \vee x_3$ & 
$3: \neg x_1 \vee \neg x_2 \vee \neg y_1$ &
$4: x_1 \vee \neg x_3 \vee y_1$ \\
$5: x_2 \vee \neg x_3 \vee y_2$ & 
$6: x_1 \vee \neg x_3 \vee \neg y_2$ &
$7: y_3 \vee x_2$ & 
$8: \neg y_3 \vee \neg x_2 \vee \neg x_3$ \\
$9: \neg y_3 \vee x_1$ & 
$10: \neg y_3 \vee \neg y_2 \vee x_3$ & 
$11: y_3 \vee y_2 \vee x_2$
\end{tabular}
\end{center}
$\textit{Var}(\Sigma) = \{x_1, x_2, x_3, y_1, y_2, y_3\}$, 
$S_{x_1}(\Sigma) = \{\alpha_1, \alpha_4, \alpha_6, \alpha_9\}$, and $\Sigma[2] = \alpha_2 = \neg x_2 \vee x_3$.
\end{example}

An {\em interpretation} (or world) over $\mathcal{P}$, denoted by $\omega$, is a mapping from $\mathcal{P}$ to $\{0, 1\}$. 
Interpretations $\omega$ are often represented by sets of literals (one per variable in $\mathcal{P}$), of exactly those literals set to $1$ by $\omega$.
The collection of all interpretations is denoted by $\mathcal{W}$. 
An interpretation $\omega$ is a {\em model} of a formula $\Sigma \in \mathcal{L}$ if and only if it satisfies the 
formula in accordance with the usual truth-functional interpretation. 
The set of models of the formula $\Sigma$ is denoted by $\text{mod}(\Sigma)$, defined as 
$\{\omega \in \mathcal{W} \mid \omega \models \Sigma\}$. 
The symbol $\models$ denotes logical entailment, while $\equiv$ denotes logical equivalence. For any formulas $\Sigma, \Psi \in \mathcal{L}$, we have 
$\Sigma \models \Psi$ if and only if $\text{mod}(\Sigma) \subseteq \text{mod}(\Psi)$ and $\Sigma \equiv \Psi$ if and only 
if $\text{mod} (\Sigma) = \text{mod}(\Psi)$. 
The notation $\norm{\Sigma}$ indicates the number of models of $\Sigma$ over $\mathit{Var}(\Sigma)$.

\begin{example}[Example \ref{ex:running} cont'd] 
$\norm{\Sigma} = 9$ and the models of $\Sigma$ are:

\begin{tabular}{lll}
$\{\neg x_1,x_2,x_3,y_1,\neg y_2,\neg y_3\}$ & 
$\{x_1,\neg x_2,x_3,y_1,y_2,y_3\}$ & 
$\{x_1, \neg x_2,x_3, \neg y_1,y_2,y_3\}$ \\
$\{x_1,\neg x_2,\neg x_3,\neg y_1,y_2,y_3\}$ &
$\{x_1,\neg x_2,\neg x_3,y_1,\neg y_2,y_3\}$ & 
$\{x_1,\neg x_2,\neg x_3,y_1,y_2,y_3\}$ \\
$\{x_1,\neg x_2,\neg x_3,\neg y_1,\neg y_2,y_3\}$ & 
$\{x_1,x_2,x_3,\neg y_1,y_2, \neg y_3\}$ &
$\{x_1,x_2,x_3,\neg y_1,\neg y_2, \neg y_3\}$ \\
\end{tabular}
\end{example}

For a formula $\Sigma \in \mathcal{L}$ and a subset $X \subseteq \mathcal{P}$, $\exists X . \Sigma$ represents, up to logical 
equivalence, the most general consequence of $\Sigma$ that is independent of the variables in $X$ (see for instance
\cite{LangLM03} for details). We note $\textit{Var}(\exists X . \Sigma) = \textit{Var}(\Sigma) \setminus X$.
%
%$\textit{Var}(\Sigma)$ denotes the set of variables occurring in $\Sigma$.
%
%When $X \subseteq \mathcal{P}$, we have $\textit{Var}(\exists X . \Sigma) = \textit{Var}(\Sigma) \setminus X$.
%

\begin{example}[Example \ref{ex:running} cont'd] 
Let $X = \{y_1, y_2, y_3\}$. We have $\norm{\exists X.\Sigma} = 4$ and the models of $\exists X . \Sigma$ over $Var(\exists X . 
\Sigma)$ are $\{\{\neg x_1, x_2, x_3\}, \{x_1, \neg x_2, x_3\},\{x_1, \neg x_2, \neg x_3\},\{x_1, x_2, x_3\}\}$.
\end{example}

The {\em conditioning} of a \cnf\ formula $\Sigma$ by a consistent term $\gamma$ results in the formula denoted by
$\Sigma_{\mid \gamma}$, where
$\Sigma_{\mid \gamma}$ is obtained from $\Sigma$ by removing each clause from containing a literal of $\gamma$ and simplifying the remaining clauses, by removing from them complementary literals to those in $\gamma$. 
If, during the simplification, a clause becomes empty, then $\Sigma_{\mid \gamma}$ is unsatisfiable. 
%$\Sigma$ is falsified. 

The conditioning of $\Sigma$ on $\ell$ is equivalent to the formula $\exists \textit{Var}(\ell) . (\Sigma \wedge \ell)$. When $\ell$ is a unit clause of $\Sigma$,
$\Sigma_{\mid \ell}$ is satisfiable if and only if $\Sigma$ is satisfiable.
%, which results into an equisatisfiable 
%formula $\Sigma_\ell$, meaning that $\text{mod}(\Sigma)$ and $\text{mod}(\Sigma|_\ell)$ have the same size. 
%
\emph{Boolean Constraint Propagation} (\bcp) \cite{MoskewiczMZZM01} is the algorithm that, given a \cnf\ formula $\Sigma$, 
returns a \cnf\ formula closed under unit propagation, i.e., that does not contain any unit clauses. The resulting formula is obtained by repeating the unit propagation of a unit clause of $\Sigma$ in the formula $\Sigma$ while such a unit clause exists. 
The identifiers assigned to clauses in $\Sigma$ remain unaltered by \bcp. Consequently, $\bcp(\Sigma)[i]$ will 
retrieve the clause $\alpha_i$ resulting from the application of \bcp\ on $\Sigma$, which could be $\bot$, $\top$, or a subset of 
$\alpha_i$.

\begin{example}[Example \ref{ex:running} cont'd] 
The formula $\bcp(\Sigma_{\mid \neg x_1}) = (x_2) \wedge (x_3) \wedge (y_1) \wedge (\neg y_2) \wedge (\neg y_3)$ is the result of conditioning $\Sigma$ with the literal $\neg x_1$ and applying \bcp\ to $\Sigma_{\mid \neg x_1}$.
$\bcp(\Sigma_{\mid \neg x_1})[1] = \top$ and $\bcp(\Sigma_{\mid \neg x_1})[4] = y_1$.
\end{example}

The resolution rule asserts that given two clauses $\alpha_1 = \{\ell, a_1, \ldots, a_n\}$ and 
$\alpha_2 = \{\overline{\ell}, b_1, \ldots, b_m\}$, the resulting clause $\alpha = \{a_1, \ldots, a_n, b_1, \ldots, b_m\}$, 
is the \textit{resolvent} of $\alpha_1$ and $\alpha_2$ on the literal $\ell$. 
This operation is denoted as $\alpha = \alpha_1 \oplus \alpha_2$. 
This concept extends naturally to sets of clauses: for two sets $S_\ell$ and $S_{\overline{\ell}}$ containing clauses that all 
involve $\ell$ and $\overline{\ell}$, respectively, we define $S_\ell \oplus S_{\overline{\ell}} = \{\alpha_1 \oplus \alpha_2 | \alpha_1 \in S_\ell, 
\alpha_2 \in S_{\overline{\ell}}, \text{ and } \alpha_1 \oplus \alpha_2 \text{ is not a tautology}\}$.

\begin{example}[Example \ref{ex:running} cont'd] 
Let $S_{\neg x_1} = \{(\neg x_1 \vee \neg x_2 \vee \neg y_1)\}$, 
$S_{y_3} = \{(y_3 \vee x_2), (y_3 \vee y_2 \vee x_2)\}$
and $S_{\neg y_3} = \{(\neg y_3 \vee \neg x_2 \vee \neg x_3), (\neg y_3 \vee x_1), (\neg y_3 \vee \neg y_2 \vee x_3)\}$. 
We have $S_{y_3} \oplus S_{\neg y_3} = \{(x_1 \vee x_2), (\neg y_2 \vee x_3 \vee x_2), (y_2 \vee x_2 \vee x_1)\}$
 and $\{(x_1 \vee x_2)\} \oplus S_{x_1} = \emptyset$.
\end{example}

The simplification technique known as {\em Blocked Clause Elimination} (\bce)~\cite{JarvisaloBH10,HeuleJLSB15}, targets the removal of specific clauses termed 
{\em blocked clauses} from \cnf\ formulas \cite{Kullmann99}. 
In the context of a \cnf\ formula $\Sigma$, a literal $\ell$ within a clause $\alpha$ is termed a {\em blocking literal}
if it blocks $\alpha$ with respect to $\Sigma$. 
This occurs when, for every clause $\alpha'$ in $\Sigma$ containing $\overline{\ell}$, the resulting resolvent 
$\alpha \oplus \alpha'$ on $\ell$ is a tautology. 
In essence, for a given \cnf\ and its clauses, a clause is considered blocked if it contains a literal that can effectively 
block it.
Applying \bce\ to $\Sigma$ leads to remove every clause containing a blocking literal and by repeating the process iteratively until no blocked literal exists.
\cite{JarvisaloBH10,HeuleJLSB15} illustrates that the outcome of \bce\ remains satisfiable  
equivalent regardless of the sequence in which blocked clauses are eliminated. 
More generally, blocked clause elimination converges to a unique fixed point for any \cnf\ formula, establishing the confluence of the method.

\begin{example}[Example \ref{ex:running} cont'd] 
\label{ex:diffCountBlocked}
Above we have shown that the clause $(x_1 \vee x_2)$ is blocked by $x_1$ and therefore can be eliminated. 
Following this, both $(\neg x_1 \vee \neg x_2 \vee \neg y_1)$ and $(x_1 \vee \neg x_3 \vee y_1)$ can be removed interchangeably, 
as they are respectively blocked by $y_1$ and $\neg y_1$.
Subsequently, $(\neg y_3 \vee \neg y_2 \vee x_3)$, blocked by $\neg y_2$, is eliminated, along with the two clauses, 
$(\neg y_3 \vee x_1)$ and $(x_1 \vee \neg x_3 \vee \neg y_2)$, both blocked by $x_1$.
Next, $(y_3 \vee x_2)$ is removed as it is blocked by $y_3$. Following this, both $(x_2 \vee \neg x_3 \vee y_2)$ and 
$(y_3 \vee y_2 \vee x_2)$, blocked by $y_2$, can be eliminated.
Then, $(\neg y_3 \vee \neg x_2 \vee \neg x_3)$ is removed because it is blocked by $y_3$, and finally, the last clause $
(\neg x_2 \vee x_3)$ is removed because it is blocked by both $\neg x_2$ and $x_3$.
Thus, $\bce(\Sigma) = \emptyset$.
\end{example}

As highlighted in \cite{Kullmann99}, the removal of any blocked clause ensures the preservation of unsatisfiability. 
However, as illustrated by the previous example, utilizing blocked clause elimination (\bce) on a \cnf\ formula $\Sigma$ 
does not ensure that the resulting formula $\bce(\Sigma)$ has the same number of models as $\Sigma$.
In the next section, we will delve into the specific conditions under which \bce\ can be effectively used for projected model counting.

\section{Blocked Clause Elimination for Projected Model Counting}
\label{sec:contrib}
Our goal is to use blocked clause elimination dynamically during search in projected model counting.
The primary challenge is to identify conditions under which such simplification is allowed.
Section \ref{subsec:contribTheo} provides novel theoretical insights permitting the removal of
blocked clauses and
Section \ref{subsec:contribImplem} introduces new algorithms to
efficiently identify them.

\subsection{Theoretical Insights}
\label{subsec:contribTheo}

As illustrated by Example \ref{ex:diffCountBlocked}, the \bce\ rule cannot be applied indiscriminately. 
When applied to the formula $\Sigma$ provided in Example \ref{ex:running}, the result is a tautological formula, indicating 
that $\norm{\bce(\Sigma)} = 1$ (since $Var(\bce(\Sigma)) = \emptyset$ this correspond to $2^6 = 64$ models over $Var(\Sigma)$), 
which differs from $\norm{\Sigma} = 9$. 
It is essential to note that blocked clause elimination guarantees the preservation of satisfiability but not necessarily 
equivalence or the number of models.
However, the picture changes when addressing the projected model counting problem. 
As we will demonstrate in Proposition \ref{prop:main}, it is feasible to eliminate clauses that are blocked on projected variables.
The rationale behind this lies in the fact that when focusing on sub-formulas containing only projected variables, the 
requirement is only to ensure satisfiability. 
Consequently, clauses blocked on projected variables can be removed in this case:

\begin{proposition}
\label{prop:main}
Let $\exists x . \Sigma$ be an existentially quantified \cnf\ formula. If a non-tautological clause $\alpha \in \Sigma$ is blocked 
by a literal $\ell \in \alpha$ with $Var(\ell) = x$, then $\exists x . \Sigma$ is logically equivalent to $\exists x . \Sigma'$, where 
$\Sigma' = \Sigma \setminus \{\alpha\}$.
\end{proposition}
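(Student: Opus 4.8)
The plan is to show the two directions of the equivalence $\exists x . \Sigma \equiv \exists x . \Sigma'$ where $\Sigma' = \Sigma \setminus \{\alpha\}$. Since $\Sigma \models \Sigma'$ trivially, and existential quantification is monotone, $\exists x . \Sigma \models \exists x . \Sigma'$ is immediate. All the work is in the converse: $\exists x . \Sigma' \models \exists x . \Sigma$. Working at the level of models over $\textit{Var}(\Sigma) \setminus \{x\}$, let $\omega$ be a model of $\exists x . \Sigma'$, i.e., there is a value $b \in \{0,1\}$ for $x$ such that $\omega \cup \{x \mapsto b\} \models \Sigma'$. I want to exhibit a (possibly different) value $b'$ for $x$ such that the extended interpretation satisfies all of $\Sigma$, in particular the dropped clause $\alpha$.

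First I would dispose of the easy case: if $\omega \cup \{x \mapsto b\}$ already satisfies $\alpha$, then it satisfies $\Sigma = \Sigma' \cup \{\alpha\}$ and we are done. So assume $\omega \cup \{x \mapsto b\}$ falsifies $\alpha$. Since $\ell \in \alpha$ with $\textit{Var}(\ell) = x$, falsifying $\alpha$ forces the value of $x$ to make $\ell$ false, i.e., $\omega \cup \{x \mapsto b\} \models \overline{\ell}$. Now flip the value of $x$ to the unique value $b'$ making $\ell$ true, and set $\omega' = \omega \cup \{x \mapsto b'\}$; by construction $\omega' \models \alpha$ (because $\omega' \models \ell$). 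It remains to check that $\omega'$ still satisfies every clause of $\Sigma'$.

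The key step — and the place where blockedness is used — is precisely this verification. Take any clause $\beta \in \Sigma'$. If $\beta$ does not mention $x$, then flipping $x$ changes nothing and $\omega' \models \beta$ since $\omega \cup \{x \mapsto b\} \models \beta$. If $\beta$ contains $\ell$, then $\omega' \models \ell$ gives $\omega' \models \beta$. The remaining case is $\beta$ containing $\overline{\ell}$: here I invoke that $\ell$ blocks $\alpha$ in $\Sigma$, so the resolvent $\alpha \oplus \beta$ on $\ell$ is a tautology. Since neither $\alpha$ nor $\beta$ is itself a tautology (for $\alpha$ this is a hypothesis; for $\beta$ we may assume clauses are non-tautological, or observe a tautological $\beta$ is trivially satisfied), the tautology of $\alpha \oplus \beta$ must come from some variable $y \neq x$ appearing positively in one of $\alpha \setminus \{\ell\}$, $\beta \setminus \{\overline\ell\}$ and negatively in the other. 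Thus $\beta$ contains a literal $m$ whose complement $\overline{m}$ lies in $\alpha$. Since $\omega \cup \{x\mapsto b\}$ falsifies $\alpha$, it falsifies $\overline{m}$, hence satisfies $m$; and since $\textit{Var}(m) \neq x$, flipping $x$ leaves $m$ satisfied, so $\omega' \models m$ and therefore $\omega' \models \beta$. Hence $\omega' \models \Sigma$, so $\omega \models \exists x . \Sigma$, completing the converse direction and the proof.

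The main obstacle is the clean extraction of the common literal $m$ from the tautology of the resolvent while correctly excluding the variable $x$ itself — one has to note that $\ell$ and $\overline{\ell}$ have already been resolved away, so the tautology of $\alpha \oplus \beta$ genuinely witnesses a clash on some other variable, which is exactly the literal that rescues $\beta$ after the flip. A minor point worth stating carefully is that the whole argument is purely semantic (it never needs $\alpha$ to be blocked by $\ell$ \emph{in $\Sigma'$} — blockedness in the original $\Sigma$ suffices, since the resolvents that matter are against clauses of $\Sigma' \subseteq \Sigma$), and that free occurrences or absence of $x$ are handled by the ``$\beta$ does not mention $x$'' case.
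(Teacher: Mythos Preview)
Your proof is correct. Both arguments share the same key observation---the tautological resolvent $\alpha \oplus \beta$ supplies a clashing literal on some variable other than $x$---but the two proofs organize the case analysis differently. You use the classical ``flip the blocking literal'' argument: take an extension $\omega \cup \{x \mapsto b\} \models \Sigma'$, and if it falsifies $\alpha$, flip $x$ to the value making $\ell$ true and then verify, clause by clause, that every $\beta \in \Sigma'$ containing $\overline{\ell}$ survives the flip via the witness literal $m$. The paper instead case-splits on which conditioning $\omega$ satisfies ($\Sigma'_{\mid \ell}$ versus $\Sigma'_{\mid \overline{\ell}}$); in the nontrivial case $\omega \models \Sigma'_{\mid \overline{\ell}}$ but $\omega \not\models \Sigma'_{\mid \ell}$, it does \emph{not} flip $x$ at all: it picks a single clause $\beta$ (with $\overline{\ell} \in \beta$) witnessing the failure of $\Sigma'_{\mid \ell}$, and uses the tautology of $\alpha \oplus \beta$ to exhibit a literal of $\alpha_{\mid \overline{\ell}}$ that $\omega$ already satisfies, thereby obtaining $\omega \models \Sigma_{\mid \overline{\ell}}$ on the same side. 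Your flip-based route is the standard BCE-preserves-satisfiability proof specialized to the projected setting and treats all clauses with $\overline{\ell}$ uniformly; the paper's route avoids the flip but needs the extra case distinction and the choice of a particular $\beta$.
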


\begin{proof}

To establish the logical equivalence $\exists x . \Sigma \equiv \exists x . \Sigma'$, we need to demonstrate both 
(1)~$\exists x . \Sigma \models \exists x . \Sigma'$ and (2)~$\exists x . \Sigma' \models \exists x . \Sigma$.
For condition (1) since $\Sigma \models \Sigma'$ it follows directly that $\exists x . \Sigma \models \exists x . \Sigma'$.
Now, let us demonstrate the second condition. 
We have to prove for any interpretation $\omega$ satisfying $\exists x . \Sigma'$, that $\omega$ also satisfies 
$\exists x . \Sigma$. 
Consider an interpretation $\omega$ satisfying $\exists x . \Sigma'$. 
This means that $\omega$ satisfies $(\Sigma'_{\mid x} \vee \Sigma'_{\mid \neg x})$. 
We need to address two scenarios depending on whether $\omega$ satisfies $\Sigma'_{\mid x}$ or $\Sigma'_{\mid \neg x}$.
If $\omega$ satisfies $\Sigma'_{\mid x}$, then $\Sigma'_{\mid x} \equiv \Sigma_{\mid x}$. 
Since $\Sigma_{\mid x}$ entails $\Sigma_{\mid x} \vee \Sigma_{\mid \neg x}$, we conclude that $\omega$ satisfies 
$\exists x . \Sigma$.
Let us consider the second scenario where $\omega$ satisfies $\Sigma'_{\mid \neg x}$ but not $\Sigma'_{\mid x}$ (the case when $\omega \models \Sigma'_{\mid x}$ has just been discussed).
First, both $\Sigma'_{\mid x}$ and $\Sigma'_{\mid \neg x}$ contain clauses from $\Sigma'$ that do not involve variable $x$. 
Therefore, if $\omega$ does not satisfy $\Sigma'_{x}$ but satisfies $\Sigma'_{\neg x}$, there must be a clause 
$\beta \in \Sigma'$ with $\neg x \in \beta$ and $\omega \nmodels \beta_{\mid x}$.
Now, let us demonstrate that $\omega$ satisfies $\Sigma_{\mid \neg x}$. 
Since $\Sigma_{\mid \neg x} \equiv (\Sigma' \wedge \alpha)_{\mid \neg x} \equiv \Sigma'_{\mid \neg x} \wedge \alpha_{\mid\neg x}$,
we only need to show that $\omega$ satisfies $\alpha_{\mid \neg x}$.
As $\alpha$ is blocked on $x$ in $\Sigma$, each resolvent between $\alpha$ and a clause of $\Sigma$ containing 
$\neg x$ is a tautology. 
Particularly, $\beta \oplus \alpha$ is a tautology, implying that there exists a literal 
$\exists y \in \beta$ such that $\neg y \in \alpha$ and $x \neq y$. 
Since we have established that $\omega \nmodels \beta_{\mid x}$, this implies that $\omega$ satisfies $\neg y$, hence $\omega$ 
satisfies $\alpha_{\mid \neg x}$. 
This demonstrates that $\omega$ satisfies $\Sigma'_{\neg x} \wedge \alpha_{\neg x}$, and consequently, $\omega$ satisfies 
$\Sigma_{\mid \neg x}$.
Using similar reasoning as before, we can show that $\omega$ satisfies $\exists x . \Sigma$.
Therefore, for any interpretation $\omega$ that satisfies $\exists x . \Sigma'$, it follows that $\omega$ satisfies $\exists x . 
\Sigma$, proving $\exists x . \Sigma' \models \exists x . \Sigma$.
\end{proof}

Proposition \ref{prop:main} only considers formulas with a single existentially quantified and thus projected variable. This 
can be extended to sets of variables:

\begin{corollary}
\label{cor:extendProp}
Let $\exists X . \Sigma$ be an existentially quantified \cnf\ formula. If a non-tautological clause $\alpha \in \Sigma$ 
is blocked by a literal $\ell \in \alpha$ such that $Var(\ell) \in X$, then $\exists X . \Sigma$ is logically equivalent to 
$\exists X . \Sigma'$, where $\Sigma' = \Sigma \setminus \{\alpha\}$.
\end{corollary}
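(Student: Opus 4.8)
The plan is to reduce the multi-variable statement to the single-variable case handled by Proposition \ref{prop:main} via a simple induction on the size of $X$, exploiting the fact that existential quantification over a set decomposes as iterated existential quantification over individual variables (up to logical equivalence), i.e.\ $\exists X . \Sigma \equiv \exists x . \exists (X \setminus \{x\}) . \Sigma$ for any $x \in X$.

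First I would choose the blocking literal's variable as the variable to peel off: let $x = Var(\ell) \in X$ and write $Y = X \setminus \{x\}$, so that $\exists X . \Sigma \equiv \exists Y . (\exists x . \Sigma)$ and likewise $\exists X . \Sigma' \equiv \exists Y . (\exists x . \Sigma')$. Since $\alpha$ is non-tautological and blocked by $\ell$ with $Var(\ell) = x$ in $\Sigma$, Proposition \ref{prop:main} applies directly and gives $\exists x . \Sigma \equiv \exists x . \Sigma'$ where $\Sigma' = \Sigma \setminus \{\alpha\}$. Applying the same existential quantifier $\exists Y$ to both sides of a logical equivalence preserves it (existential quantification is monotone with respect to $\models$, hence respects $\equiv$), so $\exists Y . (\exists x . \Sigma) \equiv \exists Y . (\exists x . \Sigma')$, and chaining the two equivalences above yields $\exists X . \Sigma \equiv \exists X . \Sigma'$.

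The only point that needs a word of care — and the closest thing to an obstacle — is justifying the two "glue" facts used above: (i) that $\exists X . \Sigma \equiv \exists x . \exists Y . \Sigma$ when $x \notin Y$ and $X = \{x\} \cup Y$, and (ii) that $\Psi_1 \equiv \Psi_2$ implies $\exists Z . \Psi_1 \equiv \exists Z . \Psi_2$. Both are standard properties of forgetting/projection: (i) is the well-known fact that forgetting a set of variables equals forgetting them one at a time in any order, and (ii) follows since $\exists Z . \Psi$ is characterized (up to equivalence) as the strongest consequence of $\Psi$ independent of $Z$, a property that depends only on $\text{mod}(\Psi)$. Neither requires the blocking hypothesis; the entire counting-relevant content is isolated in Proposition \ref{prop:main}. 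Hence the corollary follows with essentially no additional work beyond invoking the proposition once and these two bookkeeping facts about $\exists$.
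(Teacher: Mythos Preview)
Your proof is correct and follows essentially the same approach as the paper: peel off the variable $x = \mathit{Var}(\ell)$, apply Proposition~\ref{prop:main} to obtain $\exists x . \Sigma \equiv \exists x . \Sigma'$, and then quantify both sides by $\exists (X \setminus \{x\})$. The paper's version is just terser (it omits the justification of the two ``glue'' facts you spell out), and note that the induction you announce at the start is not actually needed or used---a single invocation of Proposition~\ref{prop:main} suffices.
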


\begin{proof}
The proof is straightforward.
Proposition \ref{prop:main} establishes $\exists x . \Sigma \equiv \exists x . \Sigma'$.
Therefore, we directly deduce that $\exists X \setminus \{x\}.(\exists x.\Sigma) \equiv 
\exists X \setminus \{x\}.(\exists x.\Sigma')$.
\end{proof}

Corollary \ref{cor:extendProp} demonstrates the potential of utilizing blocked clause elimination to enhance projected model 
counters. 
Our objective is not only to identify the set of blocked clauses in preprocessing but also to perform this operation during search
dynamically.
However, naive algorithms for blocked clause elimination are in the worst case at least quadratic in the size of the formula, which is clearly infeasible for dynamic blocked clause elimination.
In the following section, we capitalize on the observation that model counters typically follow the trace of DPLL solvers. 
To efficiently detect blocking literals and remove blocked clauses, a dedicated data structure along with associated algorithms are designed.

\subsection{Implementation Details}
\label{subsec:contribImplem}

To improve the efficiency of identifying clauses eligible for removal through the blocked clause elimination rule, 
we introduce the \texttt{BlockedClauseManager} object in this section. 
This specialized utility integrates efficient structures and algorithms crafted for this purpose, and is not exclusive to the 
projected model counter \texttt{d4}. 
It can be seamlessly employed in any state-of-the-art projected model counter.

To identify clauses eligible for elimination due to being blocked by a literal, we use a mechanism akin to the 
concept of watched literals. 
%
%The approach maintains an invariant that renders the clause `protected', ensuring its adherence by assigning a 
%sentinel to it.
%
%The foremost question we address is: what invariant should govern this mechanism? 
%
Given a formula $\exists X . \Sigma$, we aim to capture scenarios where a clause $\alpha$ cannot be eliminated via 
the blocked clause elimination rule, which occurs when there is no literal $\ell \in \alpha$ such that $\alpha$ is blocked on $x$,
and $\textit{Var}(\ell) \in X$. 
Specifically, a clause $\alpha$ is not blocked on a literal $\bar{\ell} \in \alpha$ if there exists another clause $\alpha'$ such that 
$\bar{\ell} \in \alpha'$ and $\alpha \oplus \alpha'$ is not a tautology.
Consequently, the invariant we adopt stipulates that for each literal $\ell \in \alpha$ such that $\textit{Var}(\ell) \in X$,
either $\ell$ is assigned or there must exist a clause $\alpha'$ where $\bar{\ell} \in \alpha'$, and $\alpha \oplus \alpha' \nequiv \top$.

\begin{example}[Example \ref{ex:running} cont'd] 
\label{ex:invariant}
When evaluating $\alpha_3$, it is not feasible to associate the literal $\neg y_1$ with a clause from $\Sigma$ 
without resulting in a tautology. 
Therefore, $\alpha_3$ can be safely removed from $\Sigma$. 
Conversely, when examining $\alpha_{11}$, it is feasible to associate the literal $y_3$ with clause $\alpha_9$ 
and the literal $y_2$ with clause $\alpha_6$, demonstrating that $\alpha_{11}$ cannot be eliminated from the formula 
using the blocked clause elimination rule.
\end{example}

Since blocked elimination can ignore (implied) learned clauses~\cite{DBLP:conf/cade/JarvisaloHB12}, the set 
$\{\alpha\} \oplus S_{\bar{\ell}}$, representing possible resolutions on a literal $\ell$ concerning a 
clause $\alpha \in \Sigma$, can be computed once at the outset. 
Consequently, when the watched clause to assess whether $\alpha$ is blocked on $\ell$ is deactivated, it suffices 
to consider clauses in $\{\alpha\} \oplus S_{\bar{\ell}}$ rather than re-evaluating each clause of $S_{\bar{\ell}}$ to
determine if the resolution rule yields a tautology.
The first data structure incorporated into our \texttt{BlockedClauseManager} is thus a set of triples 
$(\ell, \alpha, \{\alpha\} \oplus S_{\bar{\ell}})$, referred to as $\texttt{protectedTriple}$.

\begin{algorithm}[h]
\DontPrintSemicolon
\caption{\label{alg:initTriple}{\tt initProtectedTriple}}
\hrulefill

\textbf{Input:} $\exists X . \Sigma$ an existentially quantified \cnf\ formula.\\

\hrulefill

\nl $\mathtt{protectedTriple} \leftarrow \emptyset$\;

\nl\For{$x \in X$}
{
	\nl \For{$\ell \in \{x, \neg x\}$}
	{
	\nl	\For{$\alpha \in S_{\ell}(\Sigma)$}
		{
	\nl	 	$\mathtt{protectedTriple} \leftarrow \mathtt{protectedTriple} \cup 
	\{(\ell, \alpha, \{\alpha\} \oplus S_{\bar{\ell}}(\Sigma)\}$\;
		}
	}
}

\hrulefill
\end{algorithm}

The function \texttt{initProtectedTriple}, outlined in Algorithm \ref{alg:initTriple}, is designed for this purpose. 
When provided with the existentially quantified \cnf\ formula $\exists X . \Sigma$, it begins by enumerating all variables $x$ 
in $X$ (lines 2--5). 
Subsequently, it iterates through each possible triple $(\ell, \alpha, \{\alpha\} \oplus S_{\bar{\ell}})$ such that 
$\ell \in \{x, \neg x\}$, $\ell \in \alpha$, and $\alpha \in S_{\ell}$ (lines 3--5), adding them into 
$\texttt{protectedTriple}$ (line 5).
Moving forward, we will primarily work with clause identifiers rather than the clauses themselves. 
Therefore, when referring to a clause $\alpha$ in the following discussions, we are actually addressing its identifier. 
This applies similarly to sets of clauses; we will focus on the set of identifiers corresponding to the clauses within the set.

\begin{example}[Example \ref{ex:running} cont'd]
Upon invoking the function \texttt{initProtectedTriple} on the existentially quantified formula $\exists X. \Sigma$ provided 
in Example \ref{ex:running}, the set $\texttt{protectedTriple}$ contains the following triples:
$(y_1, 4, \{\})$, $(\neg y_1, 3, \{\})$, $(y_2, 5, \{6\})$, $(y_2, 11, \{6\})$, $(\neg y_2, 10, \{\})$,
 $(\neg y_2, 6, \{5,11\})$,  $(y_3, 7, \{9,10\})$, $(y_3, 11, \{9\})$, $(\neg y_3, 8, \{\})$, $(\neg y_3, 9, \{7,11\})$,
$(\neg y_3, 10, \{7\})$.
\end{example}

For each triple $(\ell, \alpha, \mathcal{C})$ in $\texttt{protectedTriple}$, we need to watch a clause 
from $\mathcal{C}$ to ensure that clause $\alpha$ is not blocked by $\ell$. 
To achieve this, we incorporate into \texttt{BlockedClauseManager} a map of watching lists, denoted as \texttt{watches}.
This structure associates each clause $\alpha \in \Sigma$ with a set of triples \texttt{watches[$\alpha$]} that are being 
watched by $\alpha$.

Algorithm \ref{alg:initWatch} presents the pseudo-code for the function \texttt{initWatchList}. 
Given an existentially quantified \cnf\ formula $\exists X . \Sigma$, this function initializes the \texttt{watches} structure 
and returns the indices of blocked clauses $U$, which are the clauses for which it is impossible to associate a sentinel. 
The function begins by initializing the set of blocked clauses as empty (line 2). 
Then, it initializes the map \texttt{watches} by associating an empty set with each clause of $\Sigma$ (lines 2--3). 
Next, it iterates over the triples in the \texttt{protectedTriple} set to associate a sentinel with each of them (lines 4--6).
For each triple $t = (\ell, \alpha, \mathcal{C})$, where $\mathcal{C}$ represents the set of non-tautological clauses, 
the algorithm checks whether $\mathcal{C}$ is empty. 
If it is, $\alpha$ is added to the set of blocked clauses (line 5). 
Otherwise, a clause $\alpha'$ from $\mathcal{C}$ is selected, and the triple $t$ is added to the watching list of $\alpha'$ 
(line 6).

\begin{algorithm}[h]
\DontPrintSemicolon
\caption{\label{alg:initWatch}{\tt initWatchList}}
\hrulefill

\textbf{Input:} $\exists X . \Sigma$ an existentially quantified \cnf\ formula.\\
\textbf{Output:} $B$ is a set of identifiers of clauses that are blocked.\\

\hrulefill

\nl $U \leftarrow \emptyset$\;
\nl Let $\mathtt{watches}$ an empty map\;
\nl \lFor{$\alpha \in \Sigma$ s.t. $Var(\alpha) \cap X \neq \emptyset$}
{
	$\mathtt{watches}[\alpha] = \{\}$
}

\nl\For{$t = (\ell, \alpha, \mathcal{C}) \in \mathtt{protectedTriple}$}
{
	\nl \lIf{$\mathcal{C} = \emptyset$}
	{
	$U \leftarrow U \cup \{\alpha\}$
	}
	\nl  \lElse{
	$\mathtt{watches}[\alpha'] \leftarrow \mathtt{watches}[\alpha'] \cup \{t\}$ with $\alpha' \in \mathcal{C}$
	}
}
\nl \Return U\;

\hrulefill
\end{algorithm}

\begin{example}[Example \ref{ex:running} cont'd]
Upon calling the function \texttt{initWatchList} on the existentially quantified formula $\exists X. \Sigma$ provided in Example 
\ref{ex:running}, the following represents a potential initialization of the \texttt{watched} structure:

\begin{tabular}{ll}
$\mathtt{watches}[6] = \{(y_2, 5, \{6\}), (y_2, 11, \{6\})\}$ & 
$\mathtt{watches}[5] = \{(\neg y_2, 6, \{5,11\})\}$ \\
$\mathtt{watches}[9] = \{(y_3, 7, \{9,10\}), (y_3, 11, \{9\})\}$ & 
$\mathtt{watches}[7] = \{(\neg y_3, 9, \{7,11\}), (\neg y_3, 10, \{7\})\}$ \\
\multicolumn{2}{l}{$\mathtt{watches}[3] = \mathtt{watches}[4] = \mathtt{watches}[8] = 
\mathtt{watches}[10] = \mathtt{watches}[11] = \emptyset$}
\end{tabular}
\end{example}

To finalize the initialization of the \texttt{BlockedClauseManager} object, we incorporate two arrays for maintaining records of assigned variables and satisfied clauses.
The first array, named \texttt{isAssignedVar}, associates each variable in $X$ with a Boolean value set to {\tt true} if the variable is assigned, 
and {\tt false} otherwise.
The second array, named \texttt{isActiveClause}, associates each clause of $\Sigma$ (identified by their identifier) with a Boolean variable 
set to {\tt true} if the clause is active, and {\tt false} otherwise.
The arrays \texttt{isAssignedVar} and \texttt{isActiveClause} are initialized with {\tt false} and {\tt true}, respectively, 
for all their elements.
We also need a stack $S$ of pairs, each consisting of variables and clauses. 
This stack is used to track the changes made to \texttt{isAssignedVar} and \texttt{isActiveClause} 
during each call of the function {\tt propagate}.

Algorithm \ref{alg:init} outlines all the necessary steps for the initialization process. 
It begins by initializing the two arrays (lines 1--2). 
Then, the set of triples is initialized by invoking the function \texttt{initProtectedTriple} on $\exists X . \Sigma$. 
Next, the \texttt{watches} structure is initialized by calling the function \texttt{initWatchList} on $\exists X . \Sigma$, 
and the set of blocked clauses is collected in $U$. 
We initialize $S$ as an empty stack of pairs, where each pair consists of a set of variables and a set of clauses.
Finally, the function \texttt{propagate} is called to gather all the initially blocked clauses. 
This function, described afterwards, takes a set of inactive clauses and a set of freshly assigned variables as input, and 
returns a set of clauses that are identified as blocked (further details will be provided later).

\begin{algorithm}[h]
\DontPrintSemicolon
\caption{\label{alg:init}{\tt init}}
\hrulefill

\textbf{Input:} $\exists X . \Sigma$ an existentially quantified \cnf\ formula.\\
\textbf{Output:} $B$ is a set of identifiers of clauses that are blocked.\\

\hrulefill

\nl Let $\mathtt{isAssigned}$ be an array s.t. $\mathtt{isAssigned}[x] = \mathtt{false}$ for each $x \in X$\;
\nl Let $\mathtt{isActiveClause}$ be an array s.t. $\mathtt{isActiveClause}[\alpha] = \mathtt{true}$ for each 
$\alpha \in \Sigma$\;
\nl {\tt initProtectedTriple}($\exists X . \Sigma$)\;
\nl $U \leftarrow \mathtt{initWatchList}(\exists X . \Sigma)$\;
\nl $S$ is an empty stack of pairs of the form (variables, clauses)\;
\nl \Return {\tt propagate($U$, $\emptyset$)}\;

\hrulefill
\end{algorithm}

Before delving into the specifics of how the function \texttt{propagate} operates, it is important to highlight that when
conditioning a formula by a literal $x$, without rendering it unsatisfiable, there is no need to consider clauses shortened 
by this assignment.
Consider a clause $\alpha \in \Sigma$ with $\bar{\ell} \in \alpha$. 
We aim to demonstrate that $\alpha \setminus \{\bar{\ell}\} \in \Sigma_{\mid \ell}$ cannot be blocked by any literal 
$\ell' \in \alpha \setminus\{\ell\}$ in $\Sigma_{\mid \ell}$. 
Given that $\alpha$ is not blocked in $\Sigma$, for every $\ell' \in \alpha \setminus \{\bar{\ell}\}$, there exists $\alpha' \in \Sigma$ 
such that $\bar{\ell'} \in \alpha'$ and $\alpha \oplus \alpha' \nequiv \top$.
Firstly, note that $\ell \notin \alpha'$; otherwise, $\alpha \oplus \alpha'$ would be a tautology. 
We then consider two cases based on whether $\bar{\ell}$ belongs to $\alpha'$. 
In the first case when $\bar{\ell} \in \alpha'$, we have $\alpha' \setminus \{\bar{\ell}\} \in \Sigma_{\mid \ell}$, and since 
the resolution between $\alpha \setminus \{\bar{\ell}\}$ and $\alpha' \setminus \{\bar{\ell}\}$ is not a tautology, it follows that
$\alpha \setminus \{\ell\}$ is not blocked on $\ell'$. 
In the second case where $\bar{\ell} \notin \alpha'$, $\alpha' \in \Sigma_{\mid \ell}$, and once more, $\alpha \setminus \{\ell\}$ is not 
blocked on $\ell'$ because $\alpha' \setminus \{\ell\} \oplus \alpha' \nequiv \top$.

\begin{example}[Example \ref{ex:invariant} cont'd] 
Let us examine the formula $\Sigma_{\mid \neg x_2}$. It is evident that $\alpha_{11}$ remains an unblocked because, with the literals 
present in the resulting clauses $y_3 \vee y_2$, we can still reference the same clauses from $\Sigma_{\mid \neg x_2}$ to maintain 
the invariant.
\end{example}

There are two scenarios where it becomes pertinent to evaluate whether a clause can be eliminated due to being blocked: when 
an active clause has been satisfied by a literal, or when an active clause has been blocked by a literal.
Thus, once certain clauses become inactive, that are clauses satisfied or blocked, it becomes imperative to update 
the \texttt{watches} structure accordingly.
The process for this update closely resembles the mechanism for updating watched literals in modern SAT solvers.
Specifically, for each newly inactive clause $\alpha$, we need to iterate through the list of triples from 
$\mathtt{watches[\alpha]}$ associated with $\alpha$. 
For each triple $t = (\ell, \alpha, \mathcal{C})$ of $\mathtt{watches[\alpha]}$, if $\ell$ is not assigned and $\alpha$ is active, 
we must search for another sentinel in $\mathcal{C}$ -- that is, an active clause.
The concept here is to ensure that each triple $t = (\ell, \alpha, \mathcal{C})$ is linked with a clause. 
Additionally, if $t$ is active, meaning $\alpha$ is active, it should be watched by an active clause. 
Otherwise, if $t$ is watched by an inactive clause $\alpha'$, we must ensure that when we reactivate $\alpha$, $\alpha'$ 
is also made active again.
This aspect is crucial as it guarantees the backtrack freeness of our structure.

Algorithm \ref{alg:propagate} outlines the pseudo-code for the function \texttt{propagate}, which fulfills the aforementioned 
requirements.
It takes as input a set of clauses identified by their identifiers, denoted as $U$, that have become inactive, and a set of 
freshly assigned variables $Y$.
The function returns the set $B$ of clauses detected as being blocked.
The algorithm begins by updating the two arrays \texttt{isAssignedVar} and \texttt{isActiveClause} to reflect the assignment 
of variables in $Y$ and the inactivation of clauses in $U$ (lines 1--2).
Next, the set of blocked clauses $B$ is initialized to be empty (line 3).
Then, for each inactive clause $\alpha$ stored in $U$, the \texttt{watches} map is updated (lines 4--18).
To accomplish this, an inactive clause $\alpha$ is selected and removed from $U$ (lines 5--6).
Subsequently, the set \texttt{tmpWatch} is initialized to be empty, and it is used to store triples that will be watched 
by $\alpha$, containing triples with assigned variables or inactive clauses (line 18).
Since $\alpha$ is now inactive, active triples associated with $\alpha$ need to be redistributed to other active clauses.

This operation is conducted in the for loop where each triple $t = (\ell, \alpha', \mathcal{C})$ from $\mathtt{watches}[\alpha]$
is considered (lines 8--17).
If $\ell$ is assigned or if $\alpha'$ is inactive (line 9), $\alpha$ can continue to watch $t$, and thus $t$ is added to 
\texttt{tmpWatch} (line 10).
However, if there exists an active clause $\alpha''$ in $\mathcal{C}$ (line 11), then triple $t$ is added to the watch list 
of $\alpha''$ (line 12).
Lastly, if it is impossible to associate $t$ with an active clause (lines 13--17), then clause $\alpha'$ is considered blocked, 
implying that $\alpha$ can continue to watch $\alpha'$ since they will both become active together upon backtracking (line 14).
Moreover, $\alpha'$ is added to $B$, $\alpha'$ is added to $U$ to handle $\alpha'$ later (line 16), 
and clause $\alpha'$ is marked as inactive (line 17).
Upon completing the update of the \texttt{watches} map, the assigned variables and inactivated clauses are pushed
onto stack $S$ (line 19). 
Finally, the set of clauses identified as blocked is returned at line 20.

\begin{algorithm}[h]
\DontPrintSemicolon
\caption{\label{alg:propagate}{\tt propagate}}
\hrulefill

\textbf{Input:} $U$ represents a set of clause identifiers corresponding to the newly inactive clauses, 
and $Y$ denotes the set of variables that have been newly assigned.\\
\textbf{Output:} $B$ is a set of identifiers of clauses that are blocked.\\

\hrulefill

\nl \lFor{$x \in Y$}{$\mathtt{isAssigned[y]} = \mathtt{true}$}
\nl \lFor{$\alpha \in U$}{$\mathtt{isActiveClause[\alpha]} = \mathtt{false}$}
\nl $B \leftarrow \emptyset$\;
\nl \While{$U \neq \emptyset$}
{
	\nl Let $\alpha \in U$\;
	\nl $U \leftarrow U \setminus \{\alpha\}$\;
	\nl $\mathtt{tmpWatch} = \emptyset$\;
	\nl \For{$t = (x, \alpha', \mathcal{C}) \in \mathtt{watches}[\alpha]$}
	{
	\nl \uIf{$\mathtt{isAssigned}[x]$ {\bf or not} $\mathtt{isActiveClause[\alpha']}$}
	 {\nl $\mathtt{tmpWatch} \leftarrow \mathtt{tmpWatch} \cup \{t\}$\;}
	\nl\uElseIf{$\exists \alpha'' \in \mathcal{C}$ s.t. $\mathtt{isActiveClause}[\alpha']$}
		{\nl $\mathtt{watches}[\alpha''] \leftarrow \mathtt{watches}[\alpha''] \cup \{t\}$}
	\nl \Else{
	\nl	$\mathtt{tmpWatch} \leftarrow \mathtt{tmpWatch} \cup \{t\}$\;
	\nl $B \leftarrow B \cup \{\alpha'\}$\;
	\nl $U \leftarrow U \cup \{\alpha'\}$\;
	\nl $\mathtt{isActiveClause}[\alpha'] \leftarrow \mathtt{false}$\;
		}
	}
	\nl $\mathtt{watches}[\alpha] \leftarrow \mathtt{tmpWatch}$\;
}
\nl push the couple $(Y, B \cup U)$ in $S$\;
\nl \Return $B$\;	
\hrulefill
\end{algorithm}

\begin{example}[Example \ref{ex:running} cont'd]
Consider the scenario where the literal $x_1$ is assigned to {\tt true}. In this case, clauses $1, 4, 6, 9$ become satisfied.
Invoking the function \texttt{propagate} with this information will result in the function returning $\{11\}$ as the set of 
detected blocked clauses. 
The various structures within our \texttt{BlockedClauseManager} object will be updated as follows:

\begin{tabular}{ll}
$\mathtt{watches}[7] = \{(\neg y_3, 9, \{7,11\}), (\neg y_3, 10, \{7\})\}$ & 
$\mathtt{watches}[5] = \{(\neg y_2, 6, \{5,11\}), (\neg y_2, 10, \{5\})\}$ \\
$\mathtt{watches}[9] = \{(y_3, 11, \{9\})\}$ & 
$\mathtt{watches}[6] = \{((y_2, 11, \{6\})\}$ \\
$\mathtt{watches}[10] = \{(y_2, 5, \{6,10\}), (y_3, 7, \{9,10\})\}$ \\
\multicolumn{2}{l}{$\mathtt{watches}[3] = \mathtt{watches}[4] = \mathtt{watches}[8] = \mathtt{watches}[11] = \emptyset$ \hfill $S = (\{x_1\}, \{1,4,6,9,11\})$}\\
\multicolumn{2}{l}{$\mathtt{isActiveClause} = [0,1,1,0,1,0,1,1,0,1,0]$  \hfill  $\mathtt{isAssignedVar} = [1,0,0]$} \\
\end{tabular}
\end{example}

As mentioned earlier, our structure is designed to be backtrack-free. 
Therefore, the only operation needed during backtracking is to retrieve from the stack $S$ the elements of the two arrays 
\texttt{isAssignedVar} and \texttt{isActiveClause} that require reinitialization. 
Algorithm \ref{alg:backtrack} outlines the steps involved in the backtracking process.

\begin{algorithm}[h]
\DontPrintSemicolon
\caption{\label{alg:backtrack}{\tt backtrack}}
\hrulefill

\nl $(X, \mathcal{C}) \leftarrow \mathtt{top}(S)$\;
\nl $\mathtt{pop}(S)$\;
\nl \lFor{$x \in X$}{$\mathtt{isAssignedVar}[x] = \mathtt{false}$}
\nl \lFor{$\alpha \in \mathcal{C}$}{$\mathtt{isActiveClause}[\alpha] = \mathtt{true}$}
\hrulefill
\end{algorithm}

To conclude this section, let us illustrate how {\tt BlockedClauseManager} is used within a {\tt DPLL}-style projected model counter,
such as the one employed in the model counter {\tt d4} \cite{LagniezM17a}. 
It is worth noting that our proposed approach can also be applied to other types of projected model counters, such as those discussed in
\cite{SharmaRSM19,LagniezM19,DudekPV20a,DudekPV20b,HecherTW20}.
Algorithm \ref{alg:count} outlines the {\tt count} function, which is invoked on $\exists X . \Sigma$, an existentially quantified \cnf\ formula, 
returning the number of models of $\exists X . \Sigma$ over $Var(\Sigma) \setminus X$. 
Specifically, this function creates a global variable named $bce$, which is a {\tt BlockedClauseManager} object (line 1), 
initializes it (line 2), removes the detected blocked clauses (line 3), and calls the recursive algorithm {\tt count\_main} on the 
simplified formula (line 4). 
It is important to note that while this function initializes the object necessary for enforcing \bce\ during model counting, 
the actual computation of the number of models is performed within the {\tt count\_main} function, which is described afterwards.

\begin{algorithm}[h]
\DontPrintSemicolon
\caption{\label{alg:count}{\tt count}}
\hrulefill

\textbf{Input:} $\exists X . \Sigma$ an existentially quantified \cnf\ formula.\\
\textbf{Output:} the number of models of $\exists X . \Sigma$ over $Var(\Sigma) \setminus X$\\

\hrulefill

\nl {\tt \bf global} $bce$ is an \texttt{BlockedClauseManager} object \;
\nl $B \leftarrow$ {\tt init}($bce$)\;
\nl $\Sigma \leftarrow \Sigma \setminus \{\Sigma[i] \text{ s.t. } i \in B\}$\;
\nl \Return {\tt count\_main}($\Sigma$)\;
\hrulefill
\end{algorithm}

Algorithm \ref{alg:count_} outlines the recursive function {\tt count\_main}, which serves as a pseudo-code representation of a {\tt DPLL}-style
projected model counter. 
This function operates on $\exists X . \Sigma$, an existentially quantified \cnf\ formula, and computes the number of models of 
$\exists X . \Sigma$ over $Var(\Sigma) \setminus X$. 
The blue portion of the algorithm, which differs from the baseline due to the inclusion of \bce\ management, will be discussed later.

The function begins by invoking \bcp\ on the input formula $\Sigma$ at line 1. 
For simplicity, we assume that \bcp\ returns a triple consisting of the set of unit literals $units$, the set of satisfied clauses $S$, 
and the simplified formula $\Sigma$ without clauses from $S$ and the unit literals from $units$. 
If the formula returned by \bcp\ contains an empty clause, indicating unsatisfiability, the function returns 0 (line 3). 
At line 5, the algorithm visits a cache to determine whether the current formula $\Sigma$ has been previously encountered during the search. 
The cache, which starts empty, stores pairs comprising a \cnf\ formula and its corresponding projected model count with respect to $X$. 
Whenever $\Sigma$ is found in the cache, instead of recalculating $\norm{\exists X . \Sigma}$ from scratch, the algorithm retrieves 
${\tt cache}(\Sigma)$ (line 7) to streamline the computation.
If the formula is satisfiable, {\tt connectedComponents} is called (line 8) on $\Sigma$ to partition it into a set of \cnf\ formulae
that are pairwise variable-disjoint. 
This procedure is a standard method employed in model counters. 
It identifies connected components of the primal graph of $\Sigma$ and returns a set $comps$ of \cnf\ formulae, ensuring that each pair 
of distinct formulae in $comps$ does not share any common variable.
The variable $cpt$, used to accumulate intermediate model counts, is initialized to $1$ (line 9). 
Then, the function iterates over the connected components $\Sigma'$ identified in $comps$ (lines 10--14) to count the number of models 
of each component. 
If the considered component $\Sigma'$ only contains variables from $X$, the model count accumulated in $cpt$ is multiplied by $1$ if $\Sigma'$ 
is satisfiable and $0$ otherwise. 
If $Var(\Sigma') \setminus X$ is not empty, a variable $v$ from this set is chosen, and the function {\tt count\_main} is recursively 
called on $\Sigma'$ where $v$ is assigned true and on $\Sigma'$ where $v$ is assigned false. 
The results returned by the two recursive calls are then summed up and multiplied by the variable $cpt$ (line 14). 
Before returning the accumulated model count in $cpt$ (line 17), the formula $\Sigma$ is added to the cache associated with the corresponding 
projected model count $cpt$ (line 15).

To integrate \bce\ into the search process (blue part), we first call the function {\tt propagate} on $S$ and $units$ to update the information managed by the $bce$ object and compute the set of blocked clauses $B$ (line 3). 
Then, at line 4, we eliminate the stored set of blocked clauses $B$.
To maintain consistency between the \bce\ manager information and the ongoing recursive call, the backtrack function must be executed before 
returning the calculated model count (lines 6 and 16).

\begin{algorithm}[h]
\DontPrintSemicolon
\caption{\label{alg:count_}{\tt count\_main}}
\hrulefill

\textbf{Input:} $\exists X . \Sigma$ an existentially quantified \cnf\ formula.\\
\textbf{Output:} the number of models of $\exists X . \Sigma$ over $Var(\Sigma) \setminus X$\\

\hrulefill

\nl $(units, S, \Sigma) \leftarrow {\tt bcp}(\Sigma)$\;
\nl \lIf{$\bot \in \Sigma$}{\Return 0}
\nl \textcolor{blue}{$B \leftarrow bce.{\tt propagate}(S, \{x|\ell \in units \text{ and } Var(\ell) = x\})$ }\;
\nl \textcolor{blue}{$\Sigma \leftarrow \Sigma \setminus \{\Sigma[i] \text{ s.t. } i \in B\}$}\;
\nl \If{${\tt cache}(\Sigma) \neq nil$}
{
\nl \textcolor{blue}{$bce.{\tt backtrack()}$}\;
\nl \Return ${\tt cache}(\Sigma)$
}
\nl $comps \leftarrow {\tt connectedComponents}(\Sigma)$\;
\nl $cpt \leftarrow 1$ \;

\nl \For{$\Sigma' \in comps$}
{
	\nl \lIf{$Var(\Sigma') \setminus X = \emptyset$}
	{
		$cpt \leftarrow cpt \times ({\tt SAT}(\Sigma') ? 1 : 0)$		
	}
	\nl	\Else{
		\nl Let $v \in Var(\Sigma') \setminus X$ \;
		\nl $cpt \leftarrow cpt \times ({\tt count\_main}(\Sigma' \wedge v) + {\tt count\_main}(\Sigma' \wedge \neg v))$\; }
	}

\nl ${\tt cache}(\Sigma) \leftarrow cpt$\;
\nl \textcolor{blue}{$bce.{\tt backtrack()}$}\;
\nl \Return $cpt$\;

\hrulefill
\end{algorithm}

\section{Experimental Evaluation}
\label{sec:xp}
Our aim was to empirically assess the advantages of employing blocked clause elimination in solving instances of the
projected model counting problem. 
For our experimentation, we used 500 CNF instances from the three recent model counting competitions 
(the 2021, 2022, and 2023 editions documented at \url{https://mccompetition.org/}). 
We excluded instances from the 2020 competition due to incompatibility with our software caused by changes in the input format.
The instances were categorized into three datasets: 200 from the 2021 competition, 200 from the 2022 competition, and 100 
from the 2023 competition. 
Notably, as the full set of 2023 instances was unavailable at the time of writing, we only included the 100 public instances 
provided by the organizers.

The projected model counter used for the evaluation was {\tt d4} \cite{LagniezM17a}. 
Our experiments were conducted on Intel Xeon E5-2643 processors running at 3.30 GHz with 32 GiB of RAM, operating on Linux
CentOS. 
Regarding the model counting competition, each instance was subject to a time-out of 3600 seconds and a memory limit of 32 GiB.
For each instance, we measured the computation times required by three different versions of {\tt d4} for counting the numbers 
of projected models. 
These versions include:
\begin{itemize}
\item {\tt d4}: This is the standard version of {\tt d4}, as given at \url{https://github.com/crillab/d4v2}.
\item {\tt d4+BCE$_p$}: This version of {\tt d4} incorporates blocked clause elimination performed once during a preprocessing 
phase.
\item {\tt d4+BCE$_i$}: In this version of {\tt d4}, blocked clause elimination is performed dynamically 
throughout the search achieved by the model counter.
\end{itemize}
For all the versions under consideration, a preprocessing step of 60 seconds was conducted. 
This preprocessing involves running {\tt BiPe} \cite{LagniezM23}, followed by the occurrence elimination and vivification
preprocessing for 10 iterations as described in \cite{LagniezM14} (we only replace the gate simplification with {\tt BiPe}).

\begin{table}[t]
\centering
\begin{tabular}{c|c|c|c|c}
& 2021 (200) & 2022 (200) & 2023 (100) & All (500)\\
\hline
{\tt d4} & 139 (56 MO) & 149 (24 MO) & 73 (9 MO) & 361 (89 MO) \\
{\tt d4+BCE$_p$} & 139 (56 MO) & 149 (25 MO) & 73 (9 MO) & 361 (90 MO)\\
{\tt d4+BCE$_i$} & {\bf 172} (23 MO) & {\bf 163} (8 MO) & {\bf 78} (4 MO) & {\bf 413} (35 MO) \\
\end{tabular}
\caption{\label{tab:global}The table shows the numbers of instances solved by different versions {\tt d4}
within a time limit of 3600 seconds and a memory limit of 32 GiB. The number of memory out (MO) are reported between
brackets.}

\end{table}

Table \ref{tab:global} presents the number of instances for which different versions of {\tt d4} terminated within the
specified time and memory constraints. 
The correctness of the extended versions of {\tt d4} was verified by comparing their returned model counts with those of the
baseline version. 
For all instances solved by the baseline version, the extended versions returned the same model counts.
The table clearly demonstrates that leveraging dynamical blocked clause elimination significantly improves the performance
of the model counter in practice. 
Furthermore, regardless the benchmark set considered, the version of {\tt d4} equipped with dynamic blocked clause
elimination systematically solved more instances than the two other versions. 
This indicates that the improvement is not limited to specific benchmark sets. 
Moreover, Table \ref{tab:global} shows that using blocked clause elimination solely during preprocessing phase 
did not lead to increase the number of instances solved. 
This demonstrates that for effective results, blocked clause elimination needs to be performed eagerly.

\begin{figure}[h]
     \centering
     \begin{subfigure}[t]{0.45\textwidth}
         \centering
		 \includegraphics[width=1.1\textwidth]{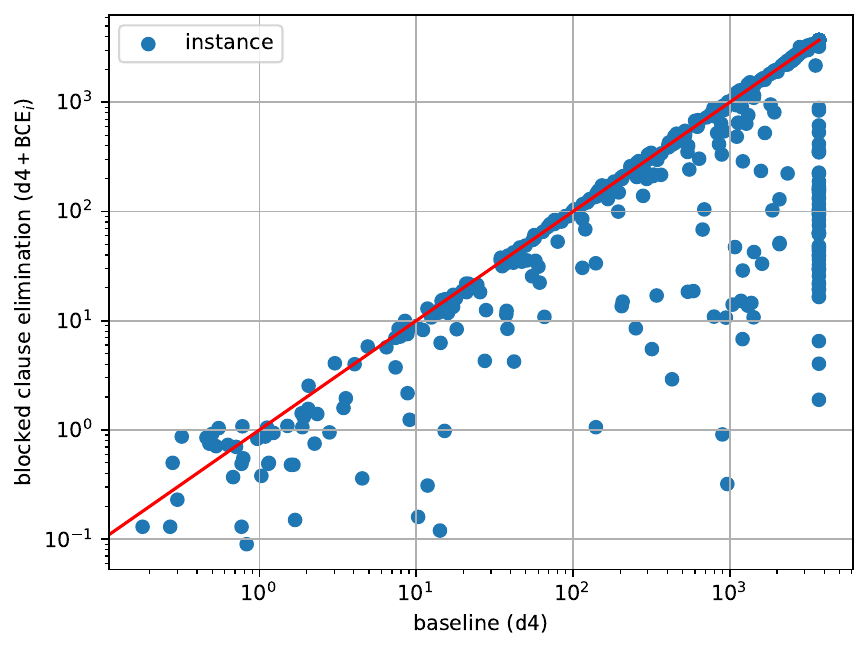}
		 \caption{\label{fig:scatter} Comparing the run times of {\tt d4} and {\tt d4+BCE$_i$}.}         
     \end{subfigure}
     \hfill
     \begin{subfigure}[t]{0.45\textwidth}
         \centering
         \includegraphics[width=1.1\textwidth]{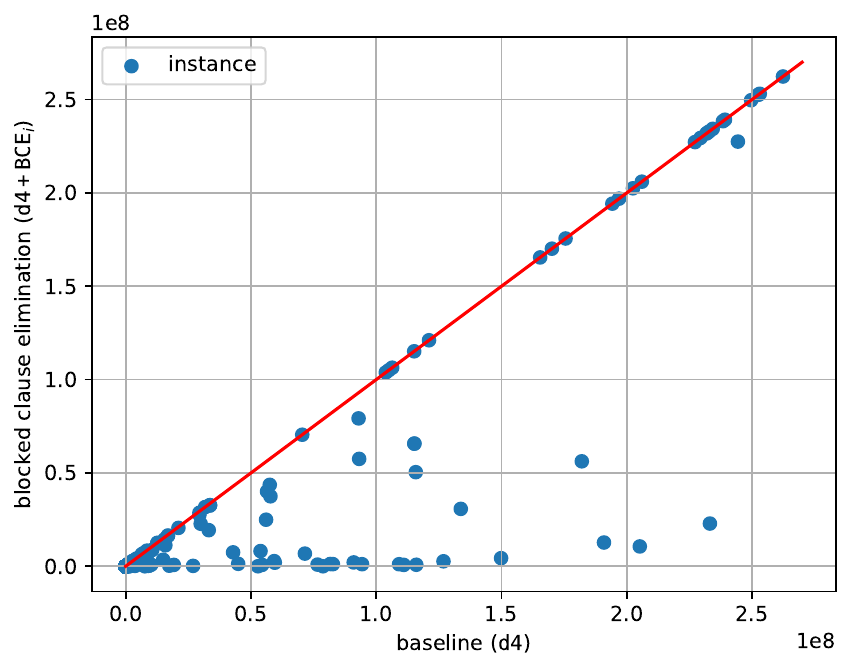}
     	 \caption{\label{fig:scatterDecision} Comparing the number of decisions made by {\tt d4} and {\tt d4+BCE$_i$} on the instances solved by both.}
     \end{subfigure}
     \hfill
     \begin{subfigure}[t]{0.45\textwidth}
         \centering
		\includegraphics[width=1.1\textwidth]{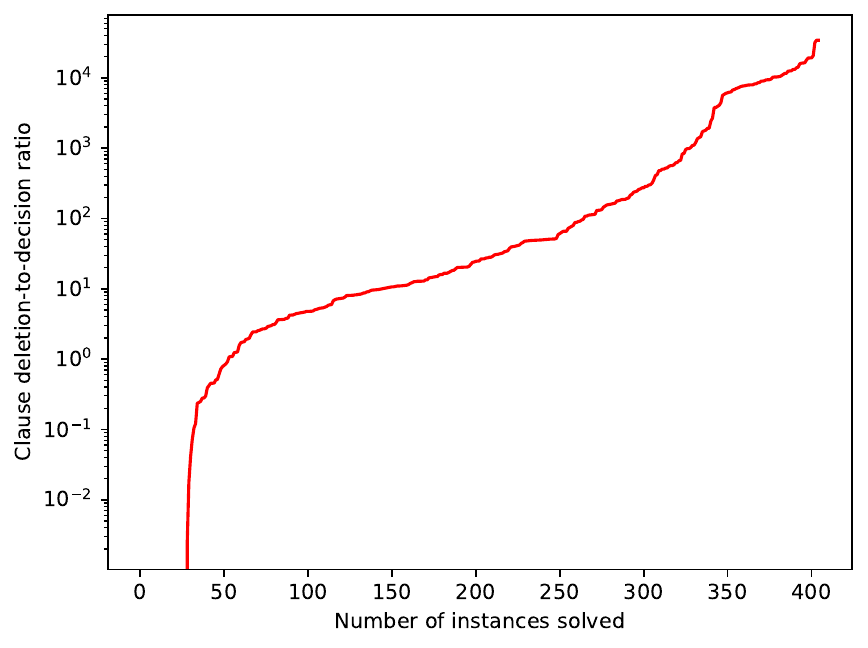}
		\caption{\label{fig:plotDelDec} Plot used to estimate the extent of clause deletion relative to the number of decisions made for instances solved by {\tt d4+BCE$_i$}.}
     \end{subfigure}
      \caption{\label{fig:xp}Experimental results}        
\end{figure}

Figure \ref{fig:scatter} presents a pairwise comparison between {\tt d4} and {\tt d4+BCE$_i$} on a scatter plot. 
%
%Notably, we omitted a pairwise comparison with {\tt d4+BCE$_p$} as this version closely resembles the baseline.
%
Each data point represents an instance, with the x-axis indicating the time (in seconds) required to solve it using the
baseline version of {\tt d4}, and the y-axis representing the time for the enhanced version {\tt d4+BCE$_i$}. 
The experimental results unequivocally demonstrate that the version of {\tt d4} with dynamic blocked clause elimination 
generally outperforms the baseline version of {\tt d4}. 
Furthermore, the figure reveals instances where {\tt d4+BCE$_i$} achieves speeds one order of magnitude faster than the
baseline version {\tt d4}.

Focusing on instances solved by both approaches and exhibiting a solving time difference of more than five
seconds, the baseline version {\tt d4} beat {\tt d4+BCE$_i$} for 75 instances, achieving an average speedup of 4\%. 
This speedup is calculated as the ratio between the running times of the methods, with a peak improvement of 14\% and the third
quartile indicating a speedup of 7\%. 
We investigated the factors contributing to the greater efficiency of the baseline version compared to the one employing
dynamic blocked clause elimination, but we were unable to identify a definitive reason. 
It is hypothesized that the removal of clauses may negatively impact branching, potentially leading to a slightly larger search 
space explored by the model counter.
On the contrary, we discovered 150 instances for which {\tt d4+BCE$_i$} outperformed the baseline version of {\tt d4}. 
For them, {\tt d4+BCE$_i$} exhibits an average speed increase of 40 times compared to the baseline, with a peak 
improvement of up to 3000 times. 
The second quartile of the time distribution demonstrates a 50\% improvement, while the third quartile shows a 
remarkable 600\% enhancement.

Figure \ref{fig:scatterDecision} showcases the number of decisions made by {\tt d4} and {\tt d4+BCE$_i$} on instances 
solved by both methods. 
This visualization sheds light on the fact that the performance enhancement cannot be solely attributed to a reduction in 
memory usage, which might otherwise account for the observed decrease in memory consumption with {\tt d4+BCE$_i$}. 
It is widely recognized that the cache structure frequently serves as the primary memory bottleneck. 
Consequently, removing clauses reduces the size of cache entries, which typically results in decreased memory consumption and 
associated memory overhead.
Nevertheless, as depicted in Figure \ref{fig:scatterDecision}, the use of blocked clause dynamic elimination also 
results in a decrease in the number of decisions required by the model counter to complete its task. 
This underscores that the performance gain is not solely a consequence of an inadequate memory limit setting. 
Thus, even with a significant increase in the memory limit, employing dynamic blocked clause elimination proves highly 
advantageous in practice.
Specifically, for 71 instances, {\tt d4} required fewer decisions than {\tt d4+BCE$_i$}, with an average difference of 6460
decisions in favor of {\tt d4}. 
The second quartile exhibited a difference of 102 decisions, while the third quartile showed a difference of 306 decisions 
across these instances. 
Conversely, {\tt d4+BCE$_i$} required fewer decisions than {\tt d4} for 231 instances, with an average difference of 12,249,678 
decisions in favor of {\tt d4+BCE$_i$}. 
The second quartile exhibited a difference of 33,005 decisions, while the third quartile showed a difference of 1,602,452 
decisions across these instances.\\

Figure \ref{fig:plotDelDec} gives the proportion of clauses removed through dynamic blocked clause 
relative to the number of decisions made for instances solved by {\tt d4+BCE$_i$}. 
As observed in the plot, for approximately 300 instances, the average number of blocked clauses removed at each decision is at 
least 10. 
For about 100 of these instances, the average number of blocked clauses removed at each decision is at least 100. 
Additionally, for certain benchmarks, more than 1000 clauses where removed at each decision. 
While there is some variation in the extent of deletion across different steps, the plot clearly demonstrates that a 
substantial number of clauses are generally eliminated when employing dynamic blocked clause elimination.

\section{Conclusion and Perspectives}
\label{sec:conclu}
In conclusion, this paper has explored the utilization of the blocked clause elimination dynamically during projected model 
counting. 
Despite its widespread application in the satisfiability problem, the blocked clause elimination rule posed challenges for model 
counting due to its inability to maintain the number of models unchanged. 
However, through focused attention on projected variables during the search for blocked clauses, we have demonstrated the 
feasibility of leveraging this rule while preserving the correct model count.
To achieve this, we introduced a new data structure and corresponding algorithms tailored for leveraging blocked clause 
elimination dynamically during search.
This innovative machinery has been integrated into the projected model counter \texttt{d4}, enabling us to conduct comprehensive
experiments that showcase the computational benefits of our approach. 
Our results underscore the efficacy of leveraging the blocked clause elimination rule technique for projected model counting, 
opening avenues for further exploration and refinement in this domain.

Exploring extensions of blocked clause elimination (BCE) in the context of projected model counting
is interesting future work.
This particularly includes considering the elimination of resolution asymmetric tautologies~(RAT)~\cite{DBLP:conf/cade/JarvisaloHB12}, or even covered~\cite{DBLP:conf/lpar/HeuleJB10a,DBLP:conf/cade/BarnettCB20} or propagation redundant (PR)~\cite{DBLP:journals/jar/HeuleKB20} clauses.
These approaches hold the potential to uncover additional redundant clauses, that can be eliminated
and thus improve efficiency of projected model counting.
In addition, we envision the development of novel branching heuristics designed to prioritize the elimination of clauses that prevent 
removal of blocked clauses. 
These improved decision heuristics, could create more instances where clauses become blocked and thus eliminated,
again with the goal to improve solver efficiency.
Furthermore, we want to explore the applicability of blocked clause elimination to other reasoning tasks, particularly
to the weighted {\tt Max\#SAT} problem \cite{AudemardLMR22,AudemardLM22} or counting tree models of QBF formulas \cite{LagniezCPS24}.
%
%Adapting and extending our methods to these domains holds the promise of broadening the impact of our research across diverse
%computational settings. 
%
%Through these endeavors, we aim to advance the state-of-the-art model counting techniques and contribute to solving complex 
%computational problems more effectively.

%%
%% Bibliography
%%

%% Please use bibtex, 

\bibliography{biblio}

\end{document}